\def\eqref#1{equation~\ref{#1}}
\def\1{\bm{1}}
\DeclareMathAlphabet{\mathsfit}{\encodingdefault}{\sfdefault}{m}{sl}
\SetMathAlphabet{\mathsfit}{bold}{\encodingdefault}{\sfdefault}{bx}{n}
\newtheorem{definition}{Definition}
\lstdefinelanguage{pseudo}{
  keywords={def, for, in, if, return, else, zero_grad, step, Args},
  keywordstyle=\bfseries\color{blue},
  ndkeywords={logits, labels, edge_index, calibration_model, gin_detector, calibration_optimizer, gin_optimizer, temperature, calibrated_logits, group_weights, ece_loss, ce_loss, total_loss, deg_info},
  ndkeywordstyle=\color{teal},
  identifierstyle=\color{black},
  comment=[l]{\#},
  commentstyle=\color{gray}\itshape,
  stringstyle=\color{red},
  sensitive=true,
  morecomment=[s][\color{purple}]{"""}{"""}
}
\tiny\color{gray},
\begin{document}

\title{Enhance GNNs with Reliable Confidence Estimation via Adversarial Calibration Learning}

\author{Yilong Wang, Jiahao Zhang, Tianxiang Zhao, Suhang Wang}
\affiliation{%
  \institution{The Pennsylvania State University}
  \city{University Park}
  \country{USA}
  }
\email{{yvw576, jiahao.zhang, tkz5084, szw494}@psu.edu}


\begin{abstract}
Despite their impressive predictive performance, GNNs often exhibit poor confidence calibration, i.e., their predicted confidence scores do not accurately reflect true correctness likelihood. This issue raises concerns about their reliability in high-stakes domains such as fraud detection, and risk assessment, where well-calibrated predictions are essential for decision-making. To ensure trustworthy predictions, several GNN calibration methods are proposed. Though they can improve global calibration, our experiments reveal that they often fail to generalize across different node groups, leading to inaccurate confidence in node groups with different degree levels, classes, and local structures. In certain cases, they even degrade calibration compared to the original uncalibrated GNN. To address this challenge, we propose a novel AdvCali framework that adaptively enhances calibration across different node groups. Our method leverages adversarial training to automatically identify mis-calibrated node groups and applies a differentiable Group Expected Calibration Error (ECE) loss term to refine confidence estimation within these groups. This allows the model to dynamically adjust its calibration strategy without relying on dataset-specific prior knowledge about miscalibrated subgroups. Extensive experiments on real-world datasets demonstrate that our approach not only improves global calibration but also significantly enhances calibration within groups defined by feature similarity, topology, and connectivity, outperforming previous methods and demonstrating its effectiveness in practical scenarios.
\end{abstract}

\maketitle

\section{Introduction}

Graphs are a fundamental data structure that naturally represents relationships in real-world applications, such as social networks~\cite{fan2019graph,zhang2024graph}, purchase networks~\cite{jin2024amazon, zhang2024linear}, citation networks~\cite{sen2008collective,hu2020open}, and biological systems~\cite{strokach2020fast,muzio2021biological}. Node classification is one of the most essential graph mining tasks, serving as the foundation for numerous downstream applications. Graph Neural Networks (GNNs)~\cite{kipf2017semisupervised,velickovic2018GAT,gilmer2017neural, wang2025bridging} have emerged as powerful tools for node classification by leveraging message-passing mechanisms to aggregate information from neighboring nodes. This approach has led to significant improvements in prediction accuracy. However, despite their strong performance, GNNs often suffer from poorly calibrated predictions~\cite{wang2021confident,hsu2022makes}, meaning that the predicted confidence scores do not accurately reflect the true correctness likelihood. This mismatch raises concerns about the trustworthiness of GNN predictions, particularly in high-stakes applications such as fraud detection~\cite{dou2020enhancing,zhang2022efraudcom}, medical diagnosis~\cite{sun2020disease,hao2021medto}, and risk assessment~\cite{cheng2019risk,bi2022company}, where reliable decision-making is critical.

Hence, confidence calibration on GNNs~\cite{wang2021confident} is attracting increasing attention and many efforts have been taken. Generally, existing GNN calibration methods can be categorized into two categories, i.e., in-training~\cite{wang2022gcl,stadler2021graph} and post-hoc methods ~\cite{wang2021confident,hsu2022makes,yang2024calibrating}. In-training methods jointly train with the node classifier and optimize calibration results by adding regularization terms; while post-hoc methods freeze the well-trained GNN classifier and train an independent temperature predictor to learn the node-wise temperature to scale the logits learned by the GNN classifier to achieve better calibration results. While these methods have shown promising results in improving global calibration, our experiments in Section ~\ref{PreliminaryAnalysis} reveal that they often have unfair performance across different node groups. In particular, calibration performance degrades on high-degree nodes, certain node classes, and structurally distinct subgraphs, sometimes even increasing uncertainty compared to uncalibrated GNNs. This failure of existing calibration methods on specific node subgroups can be attributed to several key factors. First, most calibration approaches focus solely on optimizing global calibration metrics across the entire graph, neglecting calibration performance at the subgroup level. Since these methods do not explicitly account for differences in calibration across subclasses, certain node groups—such as high-degree nodes, specific classes, or distinct subgraphs—may suffer from systematic miscalibration despite improvements in overall confidence alignment. Second, most post-hoc GNN calibration models rely on cross-entropy loss as the primary objective for calibration adjustment. However, cross-entropy loss primarily encourages correct classification rather than directly improving confidence alignment, making it difficult to achieve well-calibrated probability estimates. 

Global calibrated models with group-level miscalibration provide a fake sense of trustworthiness and can lead to critical failures in real-world applications where certain node groups require higher reliability. For example, in medical diagnosis systems~\cite{sun2020disease,lu2021weighted}, modeled as patient-disease interaction graphs, nodes representing rare diseases (minority classes) require precise uncertainty estimates to prevent misdiagnoses. In financial fraud detection, high-degree nodes, such as accounts with numerous transactions, often represent high-risk entities~\cite{zhang2021fraudre,han2025mitigating}, yet existing calibration methods may amplify uncertainty for these nodes, increasing the likelihood of undetected fraudulent activities. These examples demonstrate that calibration bias across node groups is not just a statistical issue but a fundamental problem of fairness and safety. When models systematically miscalibrate predictions for specific node groups—such as minority classes, hub nodes, or dense subgraphs—they risk reinforcing biases in decision-making or triggering cascading failures in critical systems. Thus, ensuring subgroup-aware calibration is not only a technical challenge but also essential for deploying trustworthy and reliable GNNs in real-world applications. 

Therefore, in this work, we study a novel problem of mitigating calibration bias across different node groups and achieving more reliable confidence estimation in GNNs. In essence, we are faced with two challenges: \textbf{(i)} given a dataset and a GNN classifier on the dataset, we lack the knowledge of which groups the calibration will fail. How to identify groups that will fail to improve the calibration remains a question; and \textbf{(ii)} with the group identified, how to explicitly optimize the group-wise calibration? To address these two challenges, we propose a novel adversarial calibration framework that dynamically identifies and prioritizes miscalibrated subgroups during training. Our approach introduces a group detector that adversarially learns to identify miscalibrated node groups by maximizing the calibration loss within each detected group. Our group detector autonomously identifies groups with calibration failures, rather than relying on predefined rules or human assumptions (e.g., "high-degree nodes are problematic"), which ensures adaptability across diverse datasets and graph structures. Once the miscalibrated groups are identified, we introduce a differentiable Group Expected Calibration Error (ECE) loss to guide the calibration learning process. Unlike traditional cross-entropy loss, which primarily encourages classification accuracy while often pushing logits toward extreme values, our Group ECE loss provides a more direct estimation of confidence-to-accuracy misalignment. This enables a more fine-grained calibration adjustment, improving reliability across node subgroups. Our main contributions are:
\begin{itemize}[leftmargin=*]
    \item We are the first to identify and systematically analyze the calibration bias across different node groups on graphs, highlighting its impact on GNN reliability beyond global calibration.
    \item We propose the first adversarial group calibration framework for GNNs. Our method automatically detects miscalibrated subgroups and optimizes a differentiable Group-ECE loss to minimize confidence-accuracy gaps within these groups, achieving balanced calibration across diverse subgroups.
    \item  We conduct experiments on eight widely used benchmarks. The result demonstrates that our method effectively improves GNN calibration, outperforming state-of-the-art approaches in both global and subgroup calibration metrics.
\end{itemize}

\section{Related works}

\noindent\textbf{Graph Neural Networks.} GNNs leverage graph structures to perform convolution operations for effective feature extraction. They are primarily divided into spectral-based~\cite{defferrard2016convolutional,xu2018graph,he2021bernnet,wang2022powerful} and spatial-based approaches~\cite{niepert2016learning,atwood2016diffusion,hamilton2017inductive,gilmer2017neural}. Spectral GNNs operate in the spectral domain using graph Laplacian filters, while spatial GNNs aggregate information from neighboring nodes. Notable advances include GAT~\cite{velickovic2018GAT}, which adopts attention mechanisms to aggregate neighbors, and GIN~\cite{xu2018powerful}, which enhances expressive power by leveraging injective aggregation functions. However, despite their success, GNNs often face challenges in ensuring not only high accuracy but also well-calibrated confidence estimates, which are crucial for reliable decision-making in real-world applications. This challenge has led to increasing research on confidence calibration for neural networks, aiming to ensure that predicted confidence scores accurately reflect correctness probabilities.

\noindent\textbf{General Calibration Methods.} Calibration Methods for Neural Networks aim to ensure that a model’s predicted confidence scores accurately reflect the true correctness likelihood, enhancing its trustworthiness in decision-making. Existing calibration methods for neural networks can be broadly categorized into in-processing and post-hoc approaches. In-processing methods incorporate calibration, directly into the training process, often by incorporating regularization terms or modifying the loss function.  Notable techniques include Focal loss~\cite{mukhoti2020calibrating,wang2021rethinking,ghosh2022adafocal,tao2023dual} and Maximum Mean Calibration Error (MMCE)~\cite{kumar2018trainable,widmann2019calibration}. Post-hoc methods, on the other hand, recalibrate a pre-trained model without modifying its internal parameters. These methods learn the temperature to adjust the model’s output confidence scores to better align with actual correctness probabilities. Widely used approaches include Temperature Scaling ~\cite{guo2017calibration}, and Vector Scaling ~\cite{guo2017calibration}.

\noindent\textbf{Calibration Methods for GNNs.} In recent years, there has been a growing interest in developing post-hoc calibration methods specifically for GNNs. CaGCN~\cite{wang2021confident} was the first work to address calibration in graph-based learning by introducing a separate GNN to learn node-wise temperature parameters. To mitigate GNNs' common tendency toward underconfidence, it promotes sparser output distributions. GATS~\cite{hsu2022makes} extends this approach by explicitly encoding multiple graph-specific factors into temperature learning, including neighborhood confidence variation, distance to the training set, and global bias. SimCali~\cite{tang2024simcalib} further incorporates node-wise similarity scores, leveraging graph structural properties to refine confidence estimates. DCGC~\cite{yang2024calibrating} adopts a data-centric approach, increasing the influence of homophilous connections and structurally significant edges during calibration, thereby improving confidence alignment. While these methods improve global-level calibration on graphs, they do not explicitly address subgroup-specific miscalibration, where confidence alignment varies across different node groups (e.g., high-degree nodes or underrepresented classes). This can lead to biased uncertainty estimates, negatively impacting fairness and robustness in real-world applications.

\section{Preliminary}

\subsection{Notation and Background} \label{sec:background}
\noindent\textbf{Notations.} In this paper, we focus on confidence calibration~\cite{naeini2015obtaining} in the node classification~\cite{wu2019simplifying,chen2020simple} task on graphs. Let \( \mathcal{G} = (\mathcal{V}, \mathbf{A}, \mathbf{X}, \mathcal{Y}) \) be an attributed graph, where \( \mathcal{V} = \{ v_1, \dots, v_N \} \) is the set of \( N \) nodes. The adjacency matrix is given by \( \mathbf{A} \in \mathbb{R}^{N \times N} \), where \( A_{i,j} = 1 \) if node \( v_i \) and node \( v_j \) are connected, and \( A_{i,j} = 0 \) otherwise. The node feature matrix is denoted as \( \mathbf{X} \in \mathbb{R}^{N \times d} \), where \( \mathbf{X}_i \in \mathbb{R}^{d} \) represents the attribute vector of node \( v_i \). The label set is defined as \( \mathcal{Y} = \{ y_1, \dots, y_N \} \), where \( y_i \in \{ 1, \dots, C \} \) represents the ground truth label of node \( v_i \) for a classification task with \( C \) classes. In the semi-supervised setting, only a subset of nodes in the graph are labeled. We denote the labeled node set as \( \mathcal{V}_L \subset \mathcal{V} \). In node classification task, we aim to learn a function \( f_\theta: \mathbb{R}^{N \times d} \to \mathbb{R}^{N \times C} \) on labeled set \( \mathcal{V}_L \) that maps node features and graph structure to class probabilities \( {\mathbf{P}} \in \mathbb{R}^{N \times C} \).

\noindent\textbf{Calibration on GNN.} A well-calibrated GNN ensures that the predicted confidence score accurately reflects the true correctness likelihood of the prediction ~\cite{wang2021confident}. For example, if a GNN assigns an average confidence score of 0.6 to 100 predictions, then approximately 60 of them should be correct for a well-calibrated model. Formally, a perfectly calibrated classifier satisfies the following condition:
\begin{equation}
\forall c \in [0,1], \quad \mathbb{P}(y_i = \hat{y}_i | \hat{c}_i = c) = c,
\end{equation}
where \( \hat{y}_i = \arg\max_{c}{P}_{i,c} \) is the predicted class label for node \( v_i \), and \( \hat{c}_i = \max_{c}{P}_{i,c} \) is the predicted confidence score, i.e., the maximum softmax probability across all classes. A model is perfectly calibrated when, for all confidence values \( c \in [0, 1] \), the empirical accuracy of predictions with confidence \( c \) is also \( c \). Expected calibration error (ECE)~\cite{naeini2015obtaining} is widely used to evaluate calibration performance. It is formally defined as:
\begin{definition}[Expected calibration error (ECE)]\label{def:orig_ece} Given a set of predictions made by a classifier $f$ and the corresponding ground truth, ECE first partitions predictions into \( M \) equally spaced confidence intervals. Each bin \( \mathcal{B}_m \) contains all samples whose predicted confidence falls within the interval:  $$\mathcal{B}_m := \left\{ v_i \in \mathcal{V} \mid \frac{m-1}{M} < \hat{c}_i \leq \frac{m}{M} \right\}.$$
The ECE score for $f$ is then defined as
\begin{equation}
\mathcal{L}_{\mathrm{ECE}} := \sum_{m=1}^{M} \frac{|\mathcal{B}_m|}{|\mathcal{V}|} \Big| \mathrm{Acc}(\mathcal{B}_m) - \mathrm{Conf}(\mathcal{B}_m) \Big|,
\end{equation}
where $|\mathcal{V}|$ is the number of nodes in $\mathcal{V}$, $\mathrm{Acc}(\mathcal{B}_m)$ is the average prediction accuracy for samples inside $\mathcal{B}_m$, and $\mathrm{Conf}(\mathcal{B}_m)$ is the average confidence for samples inside $\mathcal{B}_m$, which are given as
\begin{equation}
\mathrm{Acc}(\mathcal{B}_m) := \frac{1}{|\mathcal{B}_m|} \sum_{v_i \in \mathcal{B}_m} \mathbbm{1} (y_i = \hat{y}_i),~~ \mathrm{Conf}(\mathcal{B}_m) := \frac{1}{|\mathcal{B}_m|} \sum_{v_i \in \mathcal{B}_m} \hat{c}_i.
\end{equation}
\end{definition}

\subsection{Preliminary Analysis}\label{PreliminaryAnalysis}

In this subsection, we conduct preliminary experiments on real-world datasets to explore the factors influencing confidence calibration and examine whether different datasets exhibit similar sensitivity to these factors. Specifically, we train a GCN classifier on the Cora~\cite{yang2016revisiting} and Pubmed~\cite{yang2016revisiting} datasets using their predefined train/validation splits. We then apply GATS~\cite{hsu2022makes}, a widely used GNN calibration model, to calibrate the confidence scores of the trained GCN classifier. To evaluate calibration performance at both the global and subgroup levels, we analyze four key metrics, as illustrated in Figure~\ref{fig:pre1}: (i) ECE – the Expected Calibration Error computed over all nodes in the graph. (ii) Degree ECE – the calibration error specifically measured on the top 25\% highest-degree nodes. (iii) Class ECE – the calibration error computed for nodes belonging to class label 0. (iv) Subgraph ECE – the calibration error within the largest subgraph obtained using the Louvain algorithm~\cite{blondel2008fast}. These metrics allow us to examine whether calibration methods consistently improve confidence alignment across different structural and semantic groups or whether certain subgroups remain miscalibrated even after post-hoc temperature scaling. The results are shown in Fig.~\ref{fig:pre1}. 

From Fig.~\ref{fig:pre1}, we identify several key observations. First, across both datasets, the overall ECE loss decreases after calibration, demonstrating the effectiveness of existing confidence calibration techniques at the global/graph level. However, despite this improvement, calibration failures persist within specific node groups. For instance, in the Pubmed dataset, the ECE loss of high-degree nodes increases after calibration rather than improving, suggesting that standard calibration techniques may be ineffective in correcting confidence misalignment in structurally important nodes. Similarly, in the Cora dataset, class-wise ECE increases, indicating that certain classes remain miscalibrated even after temperature scaling. Additionally, compared to the Cora dataset, the calibration effect on subgraphs in Pubmed is relatively minor, highlighting that certain graph subregions may be more susceptible to miscalibration. These findings suggest that calibration performance varies across different structural and semantic subgroups, and different datasets exhibit distinct sensitivities to these factors.

\begin{figure}[t]
  \centering
  \includegraphics[width=\linewidth]{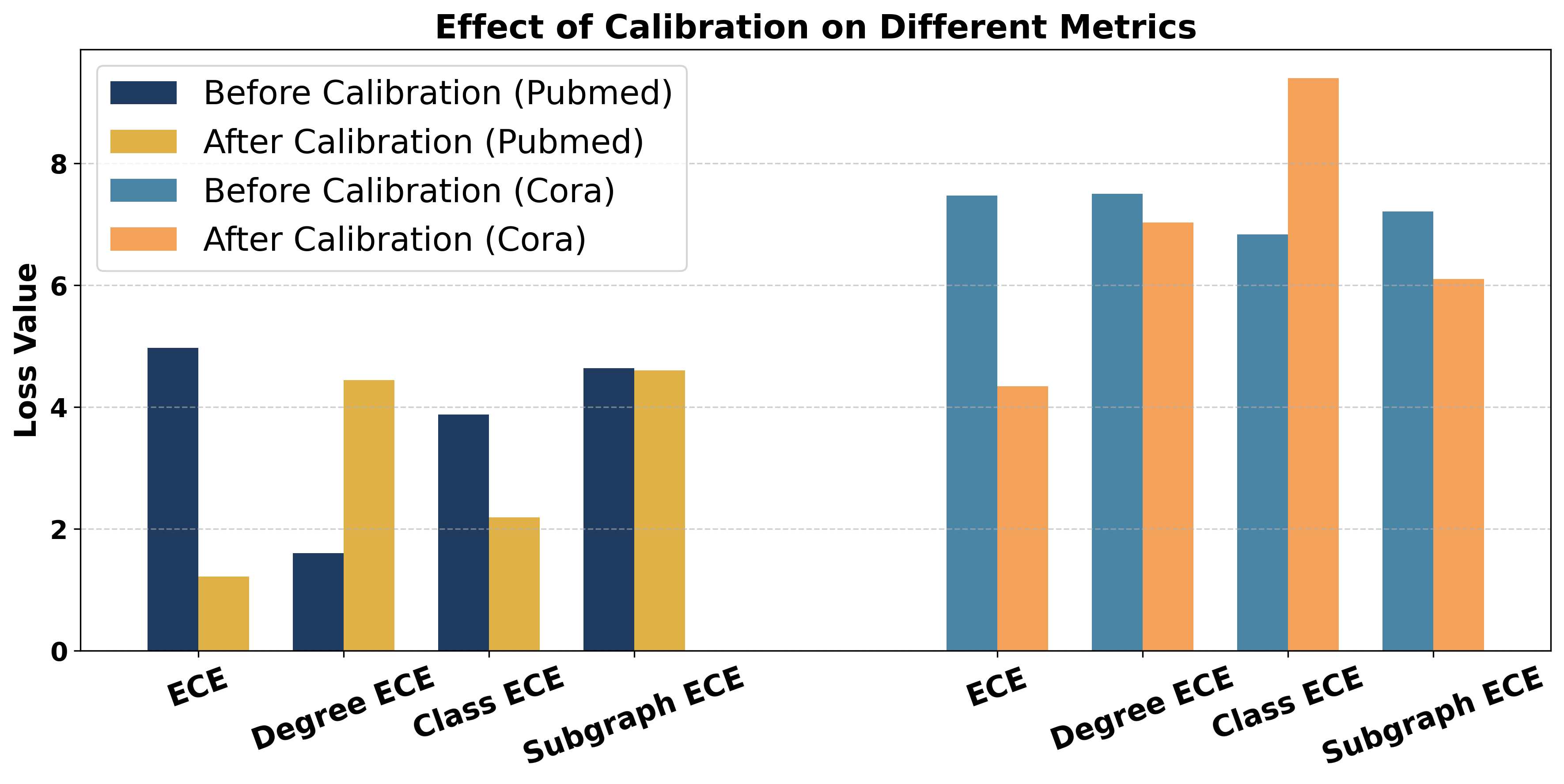}
  \vskip -1.2em
  \caption{Visualization of calibration performance on the Pubmed (left) and Cora (right)  datasets under different evaluation metrics. The y-axis represents the metric scores; lower scores indicate better calibration performance.
  }
  \Description{}
  \label{fig:pre1}
  \vskip -1em
\end{figure}

To further analyze these subgroup-specific calibration failures, we visualize reliability diagrams for both the entire graph and top-25\% high degree nodes on Pubmed with various calibration methods in Figure ~\ref{fig:pre2}. At the global level, GNN predictions exhibit an underconfident tendency, where predicted confidence scores are lower than actual accuracy. All calibration methods successfully increase model confidence, improving alignment between confidence and accuracy. However, within the high-degree node groups, we observe a contrasting pattern: The GNN prediction displays an overconfident tendency, where predicted confidence scores exceed actual accuracy at the high-confidence parts. Ideally, the calibration model should reduce confidence in these overconfident groups to achieve better alignment. However, instead of mitigating overconfidence, all these calibration methods further increase confidence, amplifying the miscalibration issue rather than correcting it. This failure suggests that current post-hoc calibration techniques struggle to adapt to subgroup-specific miscalibration patterns. 

One possible reason for this failure lies in the optimization objective used for calibration. Most existing graph calibration models rely on cross-entropy (CE) loss at the global level to guide temperature scaling. While CE loss effectively optimizes classification performance, it does not explicitly enforce confidence-accuracy alignment. Instead, it encourages the model to push the logit of the correct class higher, making predictions more confident rather than directly improving calibration. In cases where the model is already overconfident, CE loss can further amplify this misalignment, leading to calibration failure within certain node groups.

Our findings highlight two critical observations: \textbf{(i)} \textit{group-level calibration failures are prevalent in graph-structured data, and factors such as degree distribution, class, and node location significantly influence calibration performance}; and \textbf{(ii)} \textit{group-specific miscalibration patterns vary across datasets, meaning that a uniform calibration approach is insufficient}. Since different node groups exhibit distinct calibration tendencies, it is essential to first identify miscalibrated groups rather than applying a one-size-fits-all calibration strategy across the entire graph. By introducing group-specific calibration mechanisms, we can better capture intra-group variability, ensuring improved confidence-accuracy alignment both at the global level and within specific subgroups.

\begin{figure}[t]
  \centering
  \includegraphics[width=\linewidth]{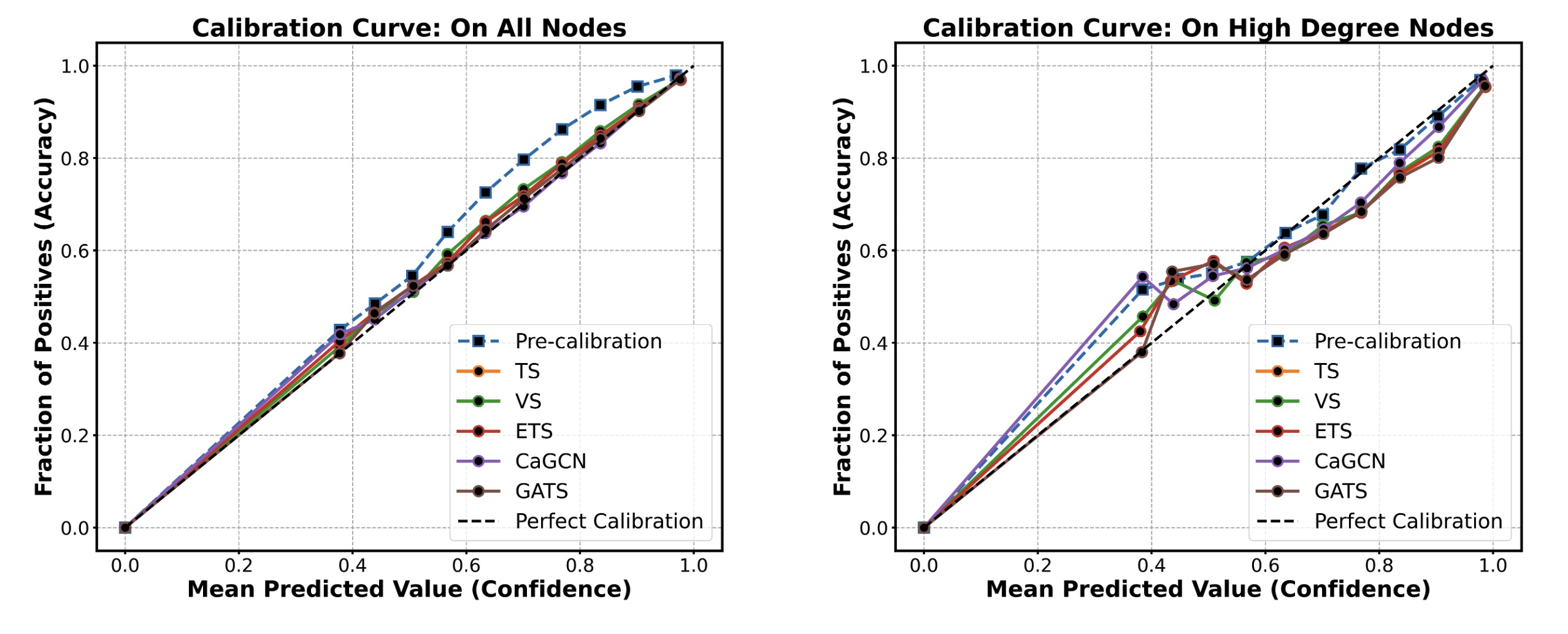}
  \vskip -1.2em
  \caption{Reliability diagrams of various calibration methods on the Pubmed dataset for the whole graph (left) and the top 25\% high-degree nodes (right). The y-axis denotes accuracy, and the x-axis denotes confidence. The diagonal line indicates perfect calibration, where confidence aligns exactly with accuracy. A curve above the diagonal means that accuracy exceeds confidence, indicating an under-confident model. Conversely, a curve below the diagonal implies accuracy is lower than confidence, meaning the model is overconfident.}
  \label{fig:pre2}
  \vskip -1em
\end{figure}

\section{Methodology}\label{sec:method}
Based on the findings from the above analysis, in this section, we propose a novel post-hoc calibration approach that adaptively refines calibration at both the global and subgroup levels to eliminate calibration bias across different node groups and achieve more effective confidence calibration. As illustrated in Figure~\ref{fig:ModelS}, our model consists of two key components: a calibration model \( f_{c} \) and a group detector \( f_{g} \). During the training process, the calibration model \( f_{c} \) receives the predicted logits from a pre-trained classification model and learns the node-wise temperature scaling, ensuring that the predicted probabilities better reflect true correctness likelihoods. In parallel, the group detector \( f_{g} \) is adversarially trained to automatically identify miscalibrated node groups, allowing the model to focus on regions where existing calibration methods fail. To jointly optimize classification and calibration, we incorporate two loss terms: the cross-entropy loss \( \mathcal{L}_{\mathrm{CE}} \), which aligns the logits with real-world data distributions to improve classification accuracy, and the group-wise differentiable ECE loss \( \mathcal{L}_{\mathrm{ECE-group}} \), which explicitly refines calibration by enforcing better confidence-accuracy alignment within the detected miscalibrated groups. By integrating these components, our model dynamically adapts to dataset-specific miscalibration patterns, effectively addressing the limitations of existing calibration methods. We introduce the details of each component in the following sections.

\subsection{Temperature Scaling Learning}
Generally, in post-hoc confidence calibration for node classification on graphs, the input consists of the adjacency matrix \( \mathbf{A} \in \mathbb{R}^{N \times N} \) and the logits \( \mathbf{Z} \in \mathbb{R}^{N \times C} \) obtained from a pre-trained classifier, where \( N \) represents the number of nodes and \( C \) denotes the number of classes. The calibration model \( f_c \) learns a node-wise temperature scaling function, producing a temperature vector \( \mathbf{t} \in \mathbb{R}^{N \times 1} \) that adjusts the confidence scores for each node independently. The temperature is computed as:
\begin{equation}\label{calimodel}
    \mathbf{t} = f_{c}(\mathbf{A}, \mathbf{Z}; \theta_c),
\end{equation}
where \( \theta_c \) is the learnable parameters of the calibration model \( f_c \). Various GNN models can be used as \( f_c \), such as GCN~\cite{kipf2017GCN} and GAT~\cite{velickovic2018GAT}. After computing the node-wise temperature, the temperature-scaled logits $\hat{\mathbf{Z}}$ and calibrated probabilities $\hat{\mathbf{P}}$ are given as:
\begin{equation}
    \hat{\mathbf{Z}}_i = \frac{\mathbf{Z}_i}{t_i}, \quad \hat{\mathbf{P}}_i = \mathrm{Softmax}(\hat{\mathbf{Z}}_i), \quad \forall v_i \in \mathcal{V}.
\end{equation}
where \( \hat{\mathbf{Z}}_{i} \) denotes the temperature-scaled logits for node \( v_i \), \( t_{i} \) is the temperature parameter for $v_i$, $\mathbf{Z}_i$ is the original logits, and $\hat{\mathbf{P}}_i$ denotes the calibrated probability distribution after applying the softmax function. As noted in Section~\ref{sec:background}, the standard ECE relies on binning strategies to estimate miscalibration, making it non-differentiable and unsuitable for direct optimization in training. Thus, instead of directly using ECE as the loss to train the model, existing works ~\cite{tang2024simcalib, hsu2022makes, yang2024calibrating} minimize cross-entropy loss $\mathcal{L}_{\mathrm{CE}}$ based on the calibrated probabilities to learn the model as:
\begin{equation}
    \mathcal{L}_{\mathrm{CE}} := -\sum_{v_i \in \mathcal{V}_L} \sum_{c=1}^C y_{i,c} \log \hat{P}_{i,c},
\end{equation}
where \( \mathcal{V}_L \) denotes the set of labeled nodes, \( y_{i,c} \) is the ground-truth label in one-hot encoding, and \( \hat{P}_{i,c} \) represents the calibrated probability of node \( v_i \) belonging to class \( c \). 

\begin{figure}[!t]
  \centering
  \includegraphics[width=0.48\textwidth]{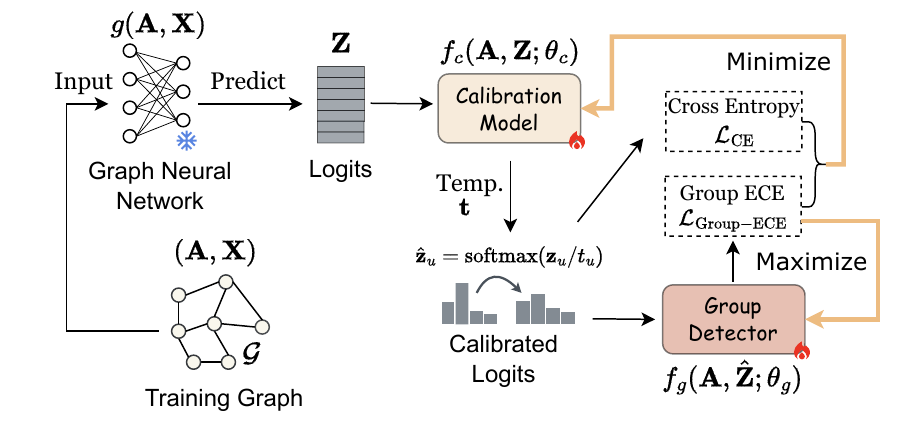} 
  \vskip -1.2em
  \caption{Illustration of the model structure.} 
  \Description{}
  \label{fig:ModelS}
  \vskip -1em
\end{figure}

\subsection{Mis-calbirated Group Detection}

While $\mathcal{L}_{\mathrm{CE}}$ helps reduce overall expected calibration error, it does not explicitly address subgroup-level calibration, potentially leading to calibration failures within certain subgroups as demonstrated in our preliminary experiments in Section~\ref{PreliminaryAnalysis}. To solve this issue, we propose to explicitly calibrate confidence for groups. However, as shown in Section~\ref{PreliminaryAnalysis}, subgroup-specific mis-calibration patterns vary across datasets. It is unclear for the given dataset and the target classifier, which groups would fail. In addition, if we predefine some groups based on some factors, e.g., degrees, though we could have better calibration results on predefined groups by explicitly optimizing their calibration, it might result in bias or failure on groups based on another factor, e.g., subclasses. To address these challenges, we propose to adopt a group detector $f_g$ to dynamically detect groups that the calibration model does not work well during training, and propose a novel group-based ECE to optimize them. 

Specifically, we use a two-layer Graph Isomorphism Network (GIN)~\cite{xu2018powerful} followed by a linear mapping as the group detector $f_g$ to identify miscalibrated node groups in the graph. The group detector takes the adjacency matrix \( \mathbf{A} \) and the temperature-scaled logits \( \hat{\mathbf{Z}} \) as input to a two-layer GIN to learn node representation $\mathbf{H}$ as:
\begin{equation}
    \mathbf{H} = \mathrm{GIN}(\mathbf{A},\hat{\mathbf{Z}}).
\end{equation}
By leveraging a GIN-based architecture, the group detector effectively captures both node attributes and structural dependencies, ensuring that group assignment is informed by both local graph connectivity and confidence calibration trends. After obtaining the node representation matrix \( \mathbf{H}\), we apply a linear mapping followed by a softmax function to compute the final group weight matrix as:
\begin{equation} \label{eq:group_detector}
    \mathbf{G} = \mathrm{Softmax} (\mathbf{H}\mathbf{W}) \in \mathbb{R}^{N \times K},
\end{equation}
where \( \mathbf{W} \in \mathbb{R}^{K \times d} \) is a learnable weight matrix that projects the hidden representations into the group assignment space. Each entry \( G_{i,k} \) in the group weight matrix represents the weight of node \( v_i \) belonging to the group \( k \), and \( K \) is the pre-defined number of groups. The softmax function ensures that the group assignments $\mathbf{G}_i$ for each node $v_i$ sum to 1, i.e., $\sum_{k=1}^{K} G_{i,k} = 1, \forall v_{i} \in \mathcal{V}.$
The output matrix \( \mathbf{G} \) provides a soft assignment of each node to multiple groups rather than a hard clustering, allowing the model to flexibly capture miscalibration patterns across different node subpopulations. 
The group detector will be trained to detect the groups that could result in bad ECE. The detected groups will be used to apply differentiated calibration adjustments, which are further refined by our proposed group-wise differentiable ECE loss, described in the next section.

\subsection{Group-based ECE}\label{sec:gb_ece}

Existing calibration models rely solely on cross-entropy loss to guide temperature scaling, which does not explicitly enforce confidence-accuracy alignment, resulting in subgroup calibration failure. To address this limitation, we design a novel group-wise differentiable ECE loss based on the group assignments provided by the group detector. The loss is defined as:

\begin{equation}\label{eq:groupece}
\mathcal{L}_{\mathrm{Group\text{-}ECE}} := \sum_{j=1}^K \frac{\mathbf{G}_{:, j}^\top\cdot\mathbf{1}_N}{|\mathcal{V}|} \cdot ( \mathrm{SoftAcc}(\mathbf{G}_{:, j}) - \mathrm{SoftConf}(\mathbf{G}_{:, j}))^2
\end{equation}
where $\mathbf{G}_{:, j}\in\mathbb{R}^N$ is the $j$-th column in the group weight matrix $\mathbf{G}\in\mathbb{R}^{N\times K}$, and $\mathbf{1}_N\in\mathbb{R}^N$ is a one-vector. Specifically, we define $\mathrm{SoftAcc}(\mathbf{G}_{:, j})$ and $\mathrm{SoftConf}(\mathbf{G}_{:, j})$ as the mean accuracy and mean confidence within soft group $j$, respectively, and measure the calibration error using the squared error between accuracy and confidence. The mean accuracy and mean confidence in a soft group are given as:
\begin{align}
    \mathrm{SoftAcc}(\mathbf{G}_{:, j})&:=~(\mathbf{G}_{:, j}^\top\cdot\mathbf{1}_N)^{-1} \sum_{v_i\in\mathcal{V}_L} G_{i,j}\cdot \mathbbm{1} (y_i = \hat{y}_i), \label{eq:soft_acc}\\
    \mathrm{SoftConf}(\mathbf{G}_{:, j})&:=~(\mathbf{G}_{:, j}^\top\cdot\mathbf{1}_N)^{-1} \sum_{v_i\in\mathcal{V}_L} G_{i,j}\cdot \hat{c}_i. \label{eq:soft_conf}
\end{align}
The confidence score of node \( v_i \) is $\hat{c}_i := \max_{c} \hat{P}_{i,c}$, where \( \hat{P}_{i,c} \) is the calibrated probability of node \( v_i \) belonging to class \( c \) learned by the calibration model. The indicator function \( \mathbbm{1}(\cdot) \) equals 1 if the predicted label \( \hat{y}_i = \arg\max_c \hat{P}_{i,c} \) matches the ground-truth label \( y_i \), and 0 otherwise. The loss is computed only on labeled nodes \( \mathcal{V}_L \), ensuring that calibration is evaluated in a semi-supervised setting.

The proposed Group-ECE objective in Eq.~\ref{eq:groupece} offers two key advantages: differentiability and improved calibration performance. Unlike the traditional ECE loss, which relies on fixed binning strategies and is non-differentiable, our formulation computes the calibration error using soft assignment weights across multiple groups. This enables direct optimization via gradient descent, enhancing both adaptability and effectiveness. Moreover, our method can be viewed as a squared error variant of the original ECE loss in Definition~\ref{def:orig_ece}, leading to better overall performance compared to the original absolute error formulation (see Section~\ref{Ablation} for the comparison results). Importantly, our formulation remains lossless: when the squared error is replaced by the absolute error, the conventional ECE loss is exactly recovered as a special case of our framework. This relationship is formally established in Appendix~\ref{sec:append_proof4}.

\subsection{Final Objective Function}
The Group-ECE loss reflects the calibration performance within each detected group. Since the role of the group detector is to identify miscalibrated node groups in the graph, we employ an adversarial training strategy where the group detector maximizes the Group-ECE loss to highlight miscalibrated node groups, where confidence misalignment remains high even after calibration, ensuring that the calibration model prioritizes these regions during optimization.  
On the other hand, the calibration model is responsible for learning appropriate temperature scaling to correct confidence misalignment. Thus, during training, we minimize the Group-ECE loss with respect to the calibration model’s parameters \( \theta_c \) to refine confidence estimation and reduce miscalibration. The overall training objective is formulated as:
\begin{equation}\label{finalobjective}
\min_{\theta_{c}} \max_{\theta_{g}} \left(  \mathcal{L}_{CE}\left(\theta_{c}\right) + \lambda \cdot \mathcal{L}_{\mathrm{Group\text{-}ECE}}(\theta_{c}, \theta_{g})\right),    
\end{equation}
where $\lambda$ is a scalar to control the contribution of Group ECE loss. 

During training, the \( \max_{\theta_{g}} \) process forces the group detector to assign higher weights to node groups exhibiting significant miscalibration by maximizing the Group-ECE loss, while \( \min_{\theta_{c}} \) optimizes the calibration model to minimize the calibration error within the detected groups, improving the overall confidence-accuracy alignment. This adversarial optimization framework ensures that the group detector and the calibration model are jointly optimized, enabling the detector to effectively identify miscalibrated regions and allowing the calibration model to focus on correcting miscalibration within these groups. By dynamically adjusting to dataset-specific miscalibration patterns, our approach achieves more robust and adaptive confidence calibration across different node groups. The training algorithm and time complexity analysis of our model are presented in Appendix.~\ref{app:timecomplexity}.

\begin{table*}[t]
\centering
\caption{Calibration performance comparison in terms of Global ECE. The best performance is highlighted in boldface.}
\label{tab:GlobalECE}
\vskip -1em
\resizebox{0.9\textwidth}{!}{%
\begin{tabular}{lcccccccc}
\toprule
Methods & Cora & Citeseer & Pubmed & Photo & Computer & CS & Physics & CoraFull \\
\midrule
Uncali & 7.47 $\pm$ 3.27  & 9.02 $\pm$ 4.92 & 4.97 $\pm$ 0.65 & 2.01 $\pm$ 0.35 & 3.69 $\pm$ 0.44 & 2.60 $\pm$ 0.47 & 1.70 $\pm$ 1.00 & 7.77 $\pm$ 0.34  \\
TS & 5.42 $\pm$ 0.52 & 8.23 $\pm$ 0.45 & 1.30 $\pm$ 0.22 & 2.04 $\pm$ 0.31 & 2.85 $\pm$ 0.32 & 2.05 $\pm$ 0.16 & 1.56 $\pm$ 0.29 & 6.92 $\pm$ 0.52\\
VS & 5.98 $\pm$ 0.38 & 7.63 $\pm$ 0.55 & 1.26 $\pm$ 0.18 & 1.74 $\pm$ 0.33 & 2.64 $\pm$ 0.21 & 2.05 $\pm$ 0.16 & 1.55 $\pm$ 0.21 & 6.42 $\pm$ 0.26 \\
CaGCN & 5.90 $\pm$ 1.50 & 7.80 $\pm$ 0.67 & 1.14 $\pm$ 0.25 & 1.76 $\pm$ 0.33 & 1.91 $\pm$ 0.66 & 1.58 $\pm$ 0.45 & 0.92 $\pm$ 0.43  & 6.85 $\pm$ 0.19  \\
DCGC & 4.23 $\pm$ 0.45 & 4.50 $\pm$ 0.21 & 1.18 $\pm$ 0.16 & 1.60 $\pm$ 0.48 & 2.01 $\pm$ 0.38 & 1.84 $\pm$ 0.31 & 0.60 $\pm$ 0.18 & 4.25 $\pm$ 0.65 \\
SimCali & 4.15 $\pm$ 0.98 & 4.59 $\pm$ 0.54 & 1.14 $\pm$ 0.15 & 1.61 $\pm$ 0.38 & 1.63 $\pm$ 0.42 & 1.56 $\pm$ 0.31 & 0.56 $\pm$ 0.16 & 4.58 $\pm$ 0.45\\
GATS &  4.34 $\pm$ 1.64  &  4.73 $\pm$ 0.87 & 1.22 $\pm$ 0.40 & 1.63 $\pm$ 0.42 & 2.48 $\pm$ 0.52 & 1.51 $\pm$ 0.26 & 0.60 $\pm$ 0.11 & 4.65 $\pm$ 0.44\\ 
AdvCali & \textbf{3.59} $\pm$ 0.63 & \textbf{4.39} $\pm$ 0.20 & \textbf{1.10} $\pm$ 0.48 & \textbf{1.56} $\pm$ 0.54 & \textbf{1.53 $\pm$ 0.27} & \textbf{1.44 $\pm$ 0.18} & \textbf{0.53 $\pm$ 0.09} & \textbf{3.95 $\pm$ 1.06} \\
\bottomrule
\end{tabular}%
}
\end{table*}

\begin{table*}[t]
\centering
\caption{Calibration performance in terms of Degree ECE and Class-wise ECE}
\label{tab:GroupECE}
\vskip -1em
\resizebox{1.0\textwidth}{!}{%
\begin{tabular}{l l c c c c c c c c}
\toprule
Metrics & Methods & Cora & Citeseer & Pubmed & Photo & Computer & CS & Physics & CoraFull \\
\midrule
\multirow{7}{*}{Degree ECE} 
& Uncali & 7.50 ± 0.95 & 9.21 ± 4.17 & 1.60 ± 0.43 & 1.49 ± 0.69 & 2.25 ± 0.56 & 2.94 ± 0.88 & 0.98 ± 0.35 & 6.67 ± 0.51\\
& DCGC & 8.45 ± 0.64 & 7.33 ± 0.76 & 3.84 ± 0.31 & 1.94 ± 0.81 & 3.24 ± 0.56 & 2.19 ± 0.42 & 0.94 ± 0.18 & 7.02 ± 0.56\\
& SimCali & 6.95 ± 0.54 & 6.93 ± 0.64 & 3.57 ± 0.54 & 1.84 ± 0.62 & 3.05 ± 0.39 & 2.06 ± 0.38 & \textbf{0.84 ± 0.18} & 6.82 ± 0.74\\
& CaGCN &10.38 ± 1.14 & 9.13 ± 0.72 & 3.04 ± 1.15 & 2.00 ± 0.96 & 2.91 ± 0.83 & 1.98 ± 0.69& 0.85 ± 0.32& 11.20 ± 4.49\\
& GATS & 7.03 ± 0.85 & 6.43 ± 0.84 & 4.44 ± 0.25 & \textbf{1.83 ± 1.00} & 3.18 ± 0.87 & 2.27 ± 0.38 & 1.07 ± 0.18 & 7.48 ± 0.60 \\
& AdvCali & \textbf{6.16 ± 0.95} & \textbf{6.35 ± 1.71} & \textbf{2.57 ± 1.12} & 1.86 ± 0.93 & \textbf{2.78 ± 0.98} & \textbf{1.92 ± 0.65} & 0.89 ± 0.14 & \textbf{6.03 ± 0.57} \\
\midrule
\multirow{7}{*}{Class ECE} 
& Uncali & 2.79$\pm$1.38 & 4.04$\pm$2.32 & 3.48$\pm$0.37 & 0.80$\pm$0.09 & 0.96$\pm$0.13 & 0.52$\pm$0.06 & 0.85$\pm$0.37 & 0.38$\pm$0.01\\
& DCGC & 2.31$\pm$0.45 &3.29$\pm$0.38 & 1.21$\pm$0.19 & 0.78$\pm$0.10  & 0.73$\pm$0.05 & 0.48$\pm$0.09 & 0.58$\pm$0.09 & 0.42$\pm$0.03\\
& SimCali & 2.13$\pm$0.44 &3.10$\pm$0.51 & 1.24$\pm$0.23 & 0.75$\pm$0.21  & 0.72$\pm$0.10  & 0.49$\pm$0.04 & 0.56$\pm$0.10 & 0.37$\pm$0.02\\
& CaGCN &2.49$\pm$0.43 &3.54$\pm$0.44 & 1.20$\pm$0.16 & 0.78$\pm$0.08  & 0.76$\pm$0.04  & 0.49$\pm$0.06 & 0.57$\pm$0.07 & 0.40$\pm$0.06\\
& GATS & 2.05$\pm$0.26 & \textbf{3.05$\pm$0.59} & 1.26$\pm$0.22 & 0.74$\pm$0.07 & 0.77$\pm$0.06 & 0.48$\pm$0.09 & 0.55$\pm$0.13 & 0.39$\pm$0.01 \\
& AdvCali & \textbf{1.91$\pm$0.08} & 3.08$\pm$0.59  & \textbf{1.19$\pm$0.30} & \textbf{0.72$\pm$0.08} & \textbf{0.69$\pm$0.09} & \textbf{0.46$\pm$0.07 } & \textbf{0.53$\pm$0.12} & \textbf{0.35$\pm$0.01}  \\
\bottomrule
\end{tabular}%
}
\end{table*}

\section{Experiments}

In this section, we conduct comprehensive experiments to evaluate the effectiveness of our proposed framework. Specifically, we aim to address the following research questions: \textbf{RQ1}: How well does our framework improve confidence calibration across different benchmark datasets and different node groups? \textbf{RQ2}: Can our model effectively adapt to different calibration and classifier backbones? \textbf{RQ3}: What insights can be derived from the learned group assignments? Can the group detector capture dataset-specific factors that influence calibration performance?

\subsection{Experimental Setup}

\noindent\textbf{Baselines}. For a comprehensive evaluation, we benchmark our proposed model, AdvCali, using GATS~\cite{hsu2022makes} as the calibration backbone and compare it against several state-of-the-art calibration methods. The baselines included in our experiments are as follows: \textbf{TS}\cite{guo2017calibration}, which applies a single temperature parameter to adjust confidence scores across all classes; \textbf{VS}\cite{guo2017calibration}, an extension of TS that learns class-wise scaling parameters for more refined calibration; \textbf{SimCali}\cite{tang2024simcalib}, which incorporates node-wise similarity scores to enhance calibration; \textbf{DCGC}\cite{yang2024calibrating}, which improves calibration from a data-centric perspective by reinforcing the edge weights of homophilous and structurally significant edges; \textbf{CaGCN}\cite{wang2021confident}, which encourages higher confidence for correctly predicted samples; and \textbf{GATS}\cite{hsu2022makes}, which accounts for confidence differences within ego-graphs and incorporates distance to the training set into calibration.
\noindent\textbf{Datasets}. We conduct experiments on eight widely used real-world graph benchmarks: \textbf{Cora}~\cite{yang2016revisiting}, \textbf{Citeseer}~\cite{yang2016revisiting}, \textbf{Pubmed}~\cite{yang2016revisiting}, \textbf{CS}~\cite{shchur2018pitfalls}, \textbf{Photo}~\cite{shchur2018pitfalls}, \textbf{Computers}~\cite{shchur2018pitfalls}, \textbf{Physics}~\cite{shchur2018pitfalls}, and \textbf{CoraFull}~\cite{bojchevski2018deep}. 
More dataset details are in Appendix.~\ref{sec:append_dataset}.

\noindent\textbf{Evaluation Metrics}.  To comprehensively assess the calibration performance of different methods, we employ three evaluation metrics: Global ECE, Degree ECE, and Class-wise ECE. (i) Global ECE: This metric evaluates the overall calibration performance by computing the expected calibration error (ECE) loss across all nodes in the graph, treating the entire graph as a single entity. (ii) Degree ECE: To measure how well models calibrate high-degree nodes, we compute the ECE loss specifically on the top 25\% highest-degree nodes in the graph. (iii) Class-wise ECE: This metric calculates the ECE loss separately for each class and then takes the average across all classes. For all three ECE metrics, we adopt a binning strategy with a fixed number of bins $M=15$, ensuring a consistent calibration evaluation across methods. 

\noindent\textbf{Implementation}. Following standard protocols in work \cite{hsu2022makes}, we employ a 2-layer GCN classifier as the backbone for node classification on each dataset. The logits obtained from the pre-trained GCN are then passed to each calibration model for post-hoc confidence calibration.  For all experiments, we perform five independent runs with different random splits, where the labeled/unlabeled data ratio is set to 15/85. Additionally, we apply a three-fold internal cross-validation on the labeled data to optimize the hyperparameters of the calibration models.

\subsection{Calibration Performance}
To address \textbf{RQ1}, we conduct each experiment 5 times and report the averaged Global ECE with standard deviations in Table~\ref{tab:GlobalECE}, and averaged Degree ECE and Class-wise ECE in Table~\ref{tab:GroupECE}. Table~\ref{tab:GlobalECE} focuses on evaluating the overall calibration performance by computing Global ECE. Table~\ref{tab:GroupECE} aims to explore the subgroup-level calibration performance of our model. Specifically, we report two key metrics: Degree ECE, and Class ECE. From the two tables, we observe:

\begin{itemize}[leftmargin=*]
    \item Compared to existing calibration models, our proposed AdvCali model achieves the lowest ECE loss across all datasets at the global level and surpasses 13 out of 16 state-of-the-art methods in group-level calibration. This demonstrates that our model not only improves calibration at the global level but also effectively reduces calibration bias among different node groups, thereby enhancing subgroup-level calibration performance.
    
    \item While previous methods, such as CaGCN and GATS, improve global calibration, they fail to calibrate different node subgroups equally. In contrast, our AdvCali explicitly optimizes subgroup-level calibration by dynamically identifying miscalibrated groups and refining their confidence estimates. This validates the effectiveness of our framework and underscores the importance of subgroup-aware calibration in graph-based learning tasks.
    
    \item Although our model achieves strong results, we observe that on certain datasets, such as CS and CoraFull, existing calibration methods have limited effectiveness in optimizing class-wise ECE. Additionally, in Pubmed, Computer, and Physics datasets, we find that while calibrated logits exhibit improved confidence estimation at the global level, high-degree nodes still suffer from greater confidence estimation errors compared to uncalibrated models. These observations highlight the widespread existence of group-level miscalibration in graph datasets, posing significant challenges for achieving reliable GNN predictions.
\end{itemize}

\subsection{Flexibility of AdvCali to Various Backbones}
To address \textbf{RQ2} and investigate whether our proposed model can effectively adapt to different calibration and classifier backbones, we replace the original pre-trained GCN classifier with GraphSAGE. Using the logits obtained from GraphSAGE, we evaluate our AdvCali model with two different $f_{c}$: GCN \cite{kipf2017GCN} and GATS \cite{hsu2022makes}. The results, presented in Table~\ref{tab:global_ece_graphsage}, report the Global ECE performance under different backbone configurations. From Table~\ref{tab:global_ece_graphsage}, we observe the following key insights: (i) Our proposed AdvCali model consistently improves calibration performance across different classifier backbones, demonstrating its generalizability. The performance gains achieved when switching from GCN to GraphSAGE indicate that our method is not restricted to a specific feature aggregation mechanism. (ii) Regardless of whether GCN or GATS is used as the calibration backbone, AdvCali consistently outperforms the corresponding baseline models, reaffirming its flexibility in adapting to different post-hoc calibration approaches.

\begin{table*}[t]
\centering
\caption{Global ECE performance across different datasets using GraphSAGE as the classifier.}
\vskip -1em
\label{tab:global_ece_graphsage}
\resizebox{\textwidth}{!}{%
\begin{tabular}{lccccccccc}
\toprule
Classifier & Model & Cora & Citeseer & Pubmed & CS & Photo & Computer & Physics & CoraFull \\
\midrule
\multirow{5}{*}{GraphSAGE} 
& Uncali & 7.36$\pm$2.98 & 5.80$\pm$1.14 & 1.26$\pm$0.21 & 3.35$\pm$0.17 & 2.03$\pm$1.32 & 2.98$\pm$1.03 & 1.72$\pm$0.59 & 8.27$\pm$0.95 \\
& GCN & 6.02$\pm$1.51 & 5.59$\pm$1.38 & 1.25$\pm$0.42 & 2.00$\pm$0.39 & 2.00$\pm$0.39 & 2.40$\pm$0.62 & 1.24$\pm$0.28 & 7.92$\pm$0.43 \\
& GATS & 3.84$\pm$0.95 & 4.85$\pm$0.92 & 1.22$\pm$0.54 & 3.00$\pm$0.56 & 1.78$\pm$0.81 & 2.73$\pm$0.88 & 0.93$\pm$0.19 & 7.69$\pm$0.63 \\
& AdvCali(GCN) & 5.35$\pm$0.96 & 5.17$\pm$0.74 & 1.21$\pm$0.32 & \textbf{1.98$\pm$0.26} & 1.57$\pm$0.62 & \textbf{2.14$\pm$0.25} & 0.99$\pm$0.22 & 6.35$\pm$0.76 \\
& AdvCali(GATS) & \textbf{3.70$\pm$1.11} & \textbf{4.76$\pm$0.93} & \textbf{1.19$\pm$0.31} & 2.39$\pm$0.66 & \textbf{1.48$\pm$0.54} & 2.40$\pm$0.59 & \textbf{0.77$\pm$0.23} & \textbf{4.89$\pm$0.60} \\
\bottomrule
\end{tabular}%
}
\end{table*}

\subsection{Ablation Study}\label{Ablation}
To elucidate the contribution of each component to the efficacy of our AdvCali, we examine various variants of \textbf{AdvCali}: (i) \textbf{AdvCali w/o CE}: Our proposed model trained using only the Group-ECE loss, without the cross-entropy loss. (ii) \textbf{AdvCali w/o Group-ECE}: Our proposed model trained using only the cross-entropy loss, without the Group-ECE loss. (iii) \textbf{AdvCali Min}: A variant where we minimize the Group-ECE loss instead of employing adversarial training for the group detector.(iv)  \textbf{AdvCali Absolute}: Replace the squared error in Equation \ref{eq:groupece} with absolute error. 

We present the Global ECE results of these variants on Cora and Citeseer in Figure~\ref{fig:abl}. From the figure, we can observe: (i) AdvCali achieves the lowest ECE loss, outperforming both AdvCali w/o Group-ECE and AdvCali w/o CE. This demonstrates that both loss terms play crucial roles in improving confidence calibration. The combination of cross-entropy loss and Group-ECE loss ensures that the model not only learns a well-calibrated confidence distribution globally but also refines subgroup-level calibration. (ii) AdvCali w/o CE performs worse than AdvCali w/o Group-ECE, suggesting that relying solely on Group-ECE is insufficient to achieve optimal calibration. While the Group-ECE loss effectively reduces subgroup miscalibration, it does not provide a strong enough supervisory signal for classification performance, leading to suboptimal results. (iii) AdvCali Min underperforms the full AdvCali model, highlighting the importance of adversarial training. (vi) Replacing squared error with absolute error leads to a noticeable performance drop, indicating that squared error provides more stable optimization and better retains critical information. 
The adversarial training allows the detector to identify critical regions where confidence misalignment is most severe, enabling more effective recalibration.

\begin{figure}[t]
  \centering
  \includegraphics[width=0.8\linewidth]{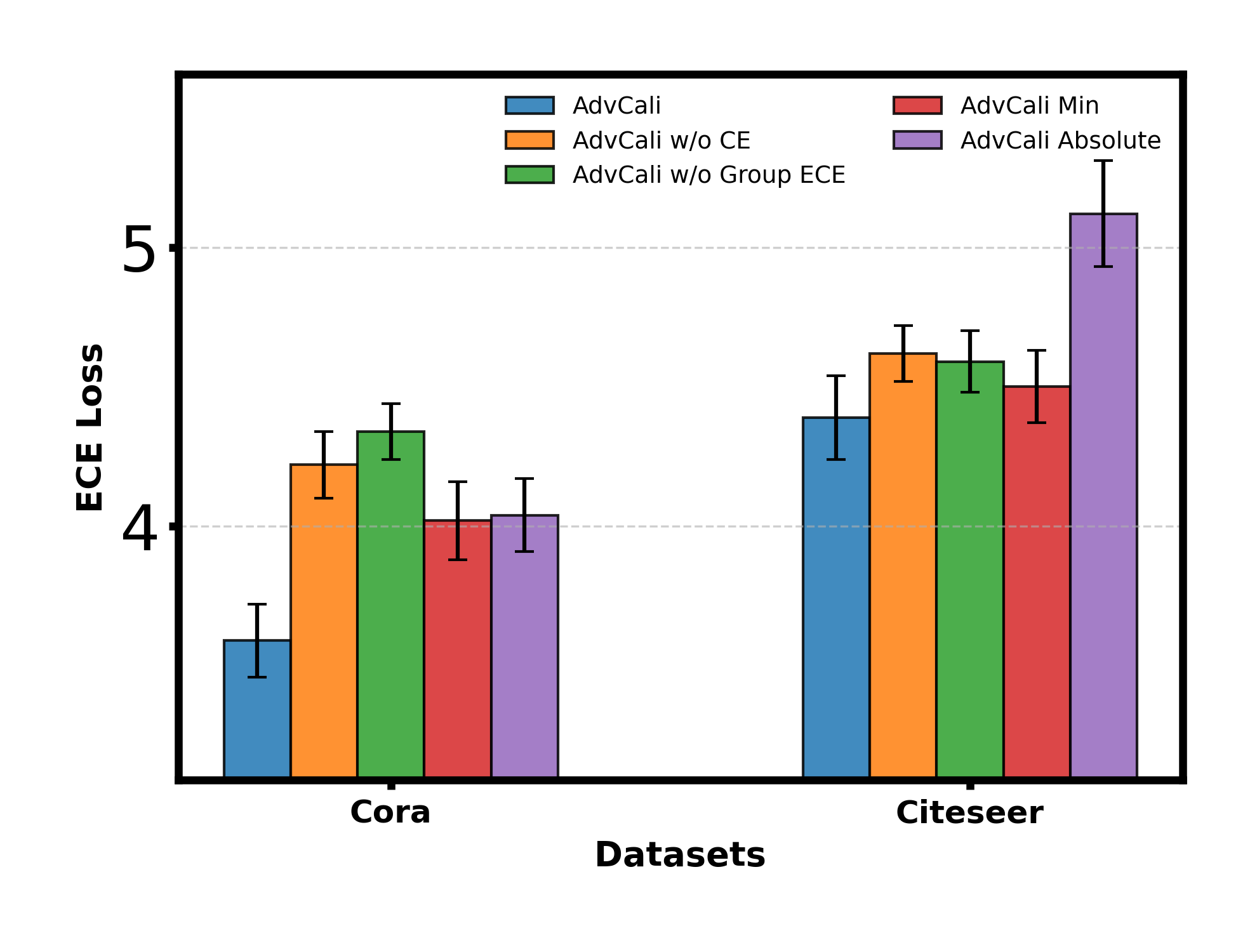}
  \vskip -2.5em
  \caption{Ablation study results.}
  \label{fig:abl}
  \vskip -1em
\end{figure}

\subsection{Hyper-parameter Analysis}

In this section, we further investigate the impact of hyperparameters on our AdvCali. Specifically, we focus on two key hyperparameters: \( \lambda \), which controls the contribution of the Group-ECE loss, and \( K \), which defines the pre-determined number of groups in the group detector. To assess the sensitivity of these hyperparameters, we vary \( \lambda \) over the set \(\{0.1, 1, 10, 100\}\) on CS and Photo, and \( K \) over the set \(\{4, 8, 12, 16\}\) on Computer and Pubmed. The results are given in Figure~\ref{fig:hyper}. From Figure~\ref{fig:hyper}, we derive the following observations: (i) As \( \lambda \) increases, the model's calibration loss initially decreases, indicating that incorporating Group-ECE loss effectively enhances confidence calibration. However, beyond a certain threshold, performance begins to degrade, suggesting that overemphasizing the Group-ECE loss can lead to suboptimal calibration. This highlights the importance of balancing the contributions of cross-entropy loss and Group-ECE loss in the optimization objective. (ii) The model's performance remains relatively stable across different values of \( K \), suggesting that the group detector is robust to variations in the number of pre-defined groups. This indicates that our model can effectively adapt to different granularities of miscalibration without requiring precise tuning of \( K \).

\begin{figure}[t]
  \centering
  \includegraphics[width=\linewidth]{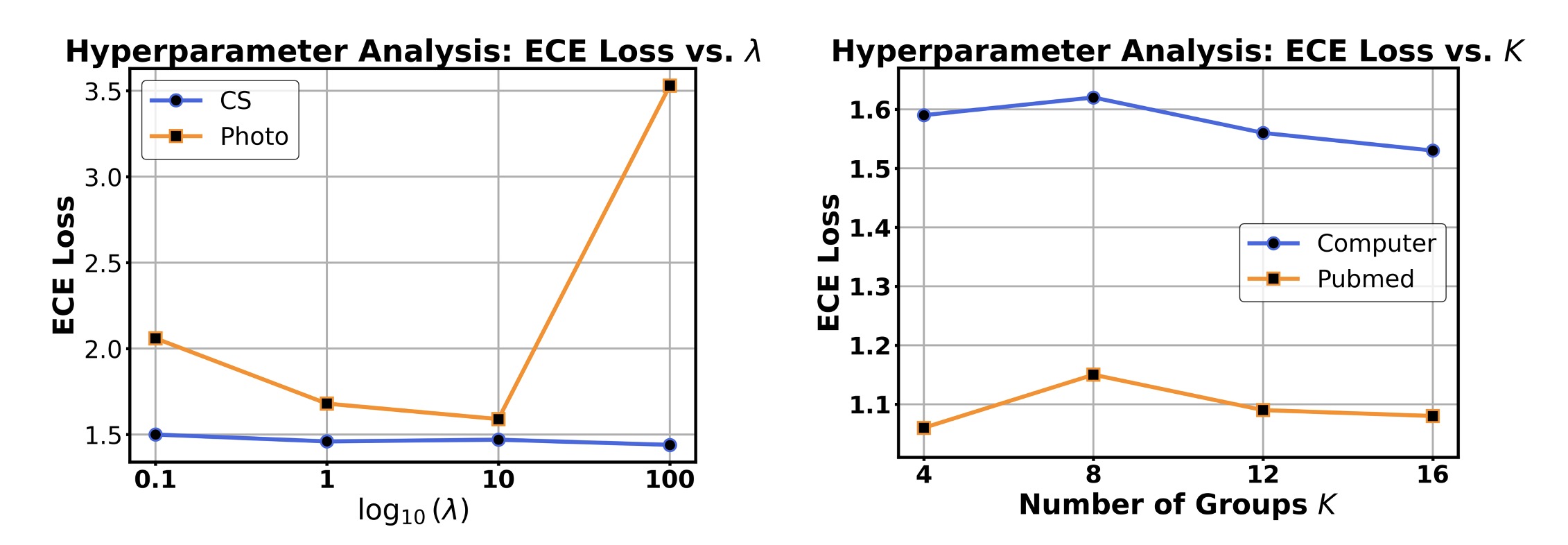}
  \vskip -1.2em
  \caption{Hyperparameter analysis of the Group-ECE contribution factor $\lambda$ and the number of groups $K$. 
  }
  \Description{}
  \label{fig:hyper}
  \vskip -1em
\end{figure}

\subsection{Understanding Group Detector}

To address \textbf{RQ3}, we examine whether the group detector can automatically differentiate nodes based on dataset-specific factors affecting calibration, even though these factors are not explicitly provided as input. We conduct experiments using our AdvCali(GATS) on three datasets: Cora, where node degree does not strongly influence calibration but certain class-specific nodes (e.g., Class 0) exhibit calibration failures, and Pubmed and Computer, where high-degree nodes face significant calibration challenges.

To analyze the learned group of the group detector, we visualize two key metrics during training: (i) \textit{Degree Standard Deviation Across Groups}, quantifies how node degree varies across the learned groups. At each epoch, we determine each node's assigned group label as $\arg\max_k G_{i,k}$.  
Then we compute the mean node degree for each group, forming a vector $\mathbf{d} = [d_0, d_1, ..., d_{K-1}]$, where $d_k$ is the mean degree of nodes assigned to group $k$. We then track $\operatorname{std}(\mathbf{d})$ over training epochs, where a higher standard deviation indicates greater differentiation between groups based on degree; and (ii) \textit{Class 0 Distribution Standard Deviation}, measures how Class 0 nodes are distributed across groups. Let $\mathcal{V}_0$ be the subset of nodes belonging to Class 0. For each epoch, we compute the proportion of Class 0 nodes assigned to each group, yielding a distribution vector $\mathbf{p} = [p_0, p_1, ..., p_{K-1}]$, where $p_k = \frac{|\{i \in \mathcal{V}_0 \mid \arg\max G_{i,k} = k\}|}{|\mathcal{V}_0|}$ represents the fraction of Class 0 nodes in group $k$. We then track $\operatorname{std}(\mathbf{p})$ over training, where a larger standard deviation indicates that Class 0 nodes are more concentrated in specific groups.

\begin{figure}[t]
  \centering
  \includegraphics[width=\linewidth]{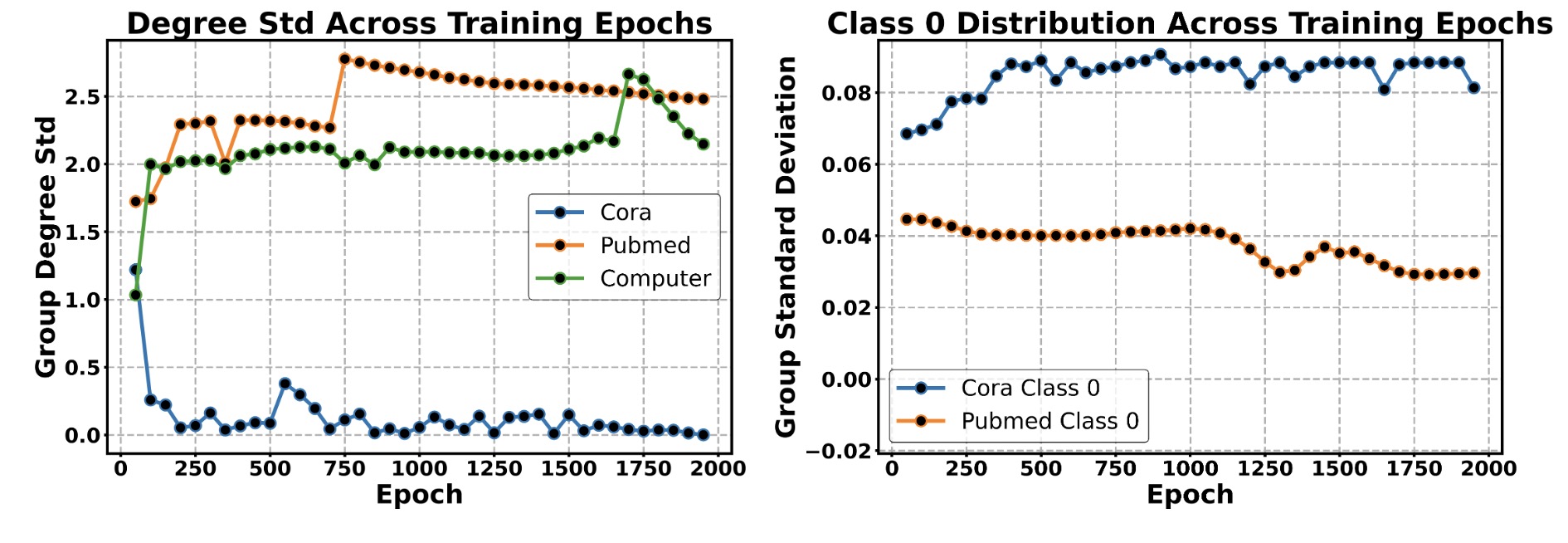}
  \vskip -1.2em
  \caption{The left plot illustrates how the degree standard deviation across groups, as learned by the group detector, evolves over training epochs. This helps visualize whether the group detector differentiates nodes based on degree information. The right plot depicts the distribution standard deviation of Class 0 nodes across different groups throughout training. This shows whether the group detector assigns Class 0 nodes consistently across groups or if their distribution shifts over time.}
  \Description{}
  \label{fig:casestudy}
  \vskip -1em
\end{figure}

The left plot in Figure~\ref{fig:casestudy} presents the degree standard deviation across groups. For Pubmed and Computer, where high-degree nodes experience calibration difficulties, we observe an increase in $\operatorname{std}(\mathbf{d})$ during training before stabilization. This suggests that the group detector actively differentiates nodes based on degree, learning that high-degree nodes require specialized calibration. In contrast, for Cora, where calibration is largely independent of degree, $\operatorname{std}(\mathbf{d})$ quickly decreases and stabilizes, indicating that the detector does not prioritize degree-based grouping. The right plot in Figure ~\ref{fig:casestudy} presents the Class 0 distribution standard deviation. In Cora, where Class 0 exhibits persistent calibration failures (as demonstrated in the preliminary analysis), we observe that $\operatorname{std}(\mathbf{p})$ increases as training progresses. This suggests that the group detector gradually isolates Class 0 nodes into distinct groups, recognizing their calibration difficulty. In contrast, for Pubmed, where Class 0 calibration improves over training, $\operatorname{std}(\mathbf{p})$ rapidly decreases, indicating that Class 0 nodes remain evenly distributed across groups, as the detector does not identify them as problematic.

These results demonstrate that the group detector learns dataset-specific miscalibration patterns, dynamically adjusting group formation based on factors that impact calibration performance.

\section{Conclusion}
In this work, we study the problem of mitigating calibration bias across different node groups to achieve more reliable confidence estimation in GNNs. To address this challenge, we propose a new framework that leverages adversarial learning to autonomously identify miscalibrated groups in graphs. Additionally, we introduce a differentiable Group-ECE loss, which explicitly guides the model to enhance the calibration within the detected miscalibrated groups. Extensive experiments demonstrate that our approach effectively improves calibration across different group levels, highlighting the importance of subgroup-aware confidence calibration for ensuring the reliability of GNN-based learning systems.


\bibliographystyle{plain}
\bibliography{sample-base}

\clearpage
\appendix

\section{Time Complexity Analysis and Training Algorithm}\label{app:timecomplexity}

\noindent\textbf{Computational Complexity Analysis.} Our model consists of two primary modules: the confidence calibration network \( f_c \), implemented as a GCN model, and the group detector \( f_g \), implemented as a fixed GIN model. Given a graph \( \mathcal{G} = (\mathcal{V}, \mathcal{E}) \), the computational complexity of both \( f_c \) and \( f_g \) is \( \mathcal{O}(|\mathcal{V}| + |\mathcal{E}|) \), as each node aggregates information from its neighbors during message passing.

During training, we optimize two loss functions: the cross-entropy loss \( \mathcal{L}_{\mathrm{CE}} \) and the proposed Group-ECE loss \( \mathcal{L}_{\mathrm{Group-ECE}} \). The cross-entropy loss is computed over the labeled node set \( \mathcal{V}_L \), leading to a complexity of \( \mathcal{O}(|\mathcal{V}_L|) \), which is typically much smaller than \( |\mathcal{V}| \) in semi-supervised settings. The Group-ECE loss requires computing confidence-accuracy misalignment for different groups, with each group iterating over all nodes. Since there are \( K \) groups, this results in a computational complexity of \( \mathcal{O}(K|\mathcal{V}|) \). Thus, the overall training complexity is given by:$\mathcal{O}(|\mathcal{V}| + |\mathcal{E}| + K|\mathcal{V}|)$. Since \( K \) is significantly smaller than \( |\mathcal{V}| \) in practice, the dominant terms remain \( \mathcal{O}(|\mathcal{V}| + |\mathcal{E}|) \).

\noindent\textbf{Training Algorithm} The training procedure follows an adversarial optimization framework, as outlined in Algorithm~\ref{alg:advcali}. At each training epoch, the calibration model \( f_c \) takes the pre-trained logits \( \mathbf{Z} \) and the graph structure \( \mathbf{A} \) as input to learn a temperature scaling parameter \( t \), producing the calibrated logits \( \mathbf{\hat{Z}} \) and probability distribution \( \mathbf{\hat{P}} \). These calibrated logits are then passed into the group detector \( f_g \), which dynamically assigns nodes to soft groups by computing the group assignment weights \( \mathbf{G} \). To enhance subgroup-level calibration, we compute the differentiable Group-ECE loss \( \mathcal{L}_{\mathrm{Group-ECE}} \) over the labeled nodes in \( \mathcal{V}_L \). The group detector \( f_g \) is trained adversarially by maximizing this loss, forcing it to identify groups where calibration discrepancies are most prominent. Simultaneously, the calibration model \( f_c \) is optimized by minimizing the final objective function, which combines the cross-entropy loss \( \mathcal{L}_{\mathrm{CE}} \) and the Group-ECE loss \( \mathcal{L}_{\mathrm{Group-ECE}} \) with a weighting factor \( \lambda \). This adversarial interplay ensures that the model not only improves global calibration but also explicitly refines confidence alignment within miscalibrated subgroups. Through this iterative process, AdvCali dynamically adjusts group assignments and refines confidence estimates, leading to improved calibration performance across diverse node groups. The final output consists of an optimized calibration model \( f_c \) and a group detector \( f_g \), which together enable effective post-hoc calibration in graph-based learning tasks.

\begin{algorithm}[t]
\caption{Training Algorithm for the Adversarial Calibration Learning (\text{AdvCali})}
\label{alg:advcali}
\begin{algorithmic}[1]
\REQUIRE Graph $\mathcal{G} = (\mathcal{V}, \mathbf{A}, \mathbf{X}, \mathcal{Y})$, labeled node set $\mathcal{V}_L$, pre-trained logits $\mathbf{Z}$, number of groups $K$, loss weight $\lambda$, max epochs $T$.
\FOR{epoch = 1 to $T$}
    \STATE Feed logits $\mathbf{Z}$ and adjacency matrix $\mathbf{A}$ to the calibration model $f_c$ to learn the temperature $t$ and obtain scaled logits $\mathbf{\hat{Z}}$ and calibrated probability $\mathbf{\hat{P}}$.
    \STATE Feed the scaled logits $\mathbf{\hat{Z}}$ into the group detector $f_g$ to compute the group assignment weights $\mathbf{G}$.
    \STATE Compute the Group-ECE loss $\mathcal{L}_{\mathrm{Group-ECE}}$ on labeled set $\mathcal{V}_{L}$ according to Eq.~(\ref{eq:groupece}).
    \STATE Update the group detector $f_g$ adversarially by maximizing $\mathcal{L}_{\mathrm{Group-ECE}}$.
    \STATE Compute the cross-entropy loss $\mathcal{L}_{\mathrm{CE}}$ on labeled set $\mathcal{V}_{L}$ and form the final loss function $\mathcal{L}$ in Eq.~(\ref{finalobjective}).
    \STATE Update the calibration model $f_c$ by minimizing $\mathcal{L}$.
\ENDFOR
\STATE \textbf{Return:} Optimized calibration model $f_c$ and group detector $f_g$.
\end{algorithmic}
\end{algorithm}

\section{Dataset Details}\label{sec:append_dataset}

\begin{table}[!ht]
\caption{Dataset Statistics.} 
\vskip -0.1in
\centering
\begin{tabular}{lccccc}
\toprule
\textbf{Dataset} & \textbf{\# Nodes} & \textbf{\# Edges} & \textbf{\# Features} & \textbf{\# Classes} \\
\midrule
Cora & 2,708 & 5,278 & 1,433 & 7 \\
CiteSeer & 3,327 & 4,552 & 3,703 & 6 \\
PubMed & 19,717 & 44,324 & 500 & 3 \\
Photo & 7,487 & 119,043 & 745 & 8 \\
Computer & 13,381 & 245,778 & 767 & 10 \\
CS & 18,333 & 81,894 & 6,805 & 15 \\
Physics & 34,493 & 247,962 & 8,415 & 5 \\
Cora-Full & 19,793 & 126,842 & 8,710 & 70 \\
\bottomrule
\end{tabular}
\label{tab:dataset}
\end{table}

In this section, we supplement the missing details of our benchmark datasets. The dataset statistics can be found in Table~\ref{tab:dataset}.
\begin{itemize}[leftmargin=*]
    \item \textbf{Cora.} Cora~\cite{yang2016revisiting} is a multi-class classification dataset that categorizes AI-related papers based on their citation relations. Each publication is represented by a 0/1-valued word vector, where each entry indicates the presence or absence of the corresponding word from the dictionary.
    
    \item \textbf{Citeseer.} Citeseer~\cite{yang2016revisiting} is a multi-class classification dataset that categorizes CS papers into subfields based on their citation relations. Each publication is represented by a 0/1-valued word vector, where each entry indicates the presence or absence of the corresponding word from the dictionary.
    
    \item \textbf{PubMed.} PubMed~\cite{yang2016revisiting} is a multi-class classification dataset derived from the PubMed database, focusing on papers related to diabetes. Each publication in the dataset is described by a TF-IDF word vector from a dictionary with 500 unique words.
    
    \item \textbf{Photo.} Amazon Photo~\cite{shchur2018pitfalls} is a multi-class classification dataset sourced from the Amazon e-commerce platform, categorizing photos into different types. Each node represents a photo, each edge represents a frequent co-purchase relation, and the features are bag-of-words features.
    
    \item \textbf{Computer.} Amazon Computer~\cite{shchur2018pitfalls} is another multi-class classification dataset sourced from the Amazon e-commerce platform, categorizing computers into different types. Each node represents a computer, each edge represents a frequent co-purchase relation, and the features are bag-of-words features.
    
    \item \textbf{CS.} Coauthor CS~\cite{shchur2018pitfalls} is a multi-class classification dataset that categorizes computer science papers into the most active fields of study for each author based on the Microsoft Academic Graph~\cite{sandulescu2016predicting}. In this dataset, nodes represent publications and edges represent citation relations. The node features correspond to paper keywords.
    
    \item \textbf{Physics.} Coauthor Physics~\cite{shchur2018pitfalls} is a multi-class classification dataset that categorizes physics papers into the most active fields of study for each author based on the Microsoft Academic Graph~\cite{sandulescu2016predicting}. The meaning of its nodes, edges, and features is similar to that of the Coauthor CS dataset.
    
    \item \textbf{Cora-Full.} Cora-Full~\cite{bojchevski2018deep} is an expanded version of the Cora dataset, which includes all papers and classes, not just those related to AI. The meaning of nodes, edges, and features is similar to that of its smaller version.
\end{itemize}

\section{Relationship between Group-ECE and ECE}\label{sec:append_proof4}
In this section, we examine the relationship between our proposed Group-ECE loss in Eq.~\ref{eq:groupece} and the original ECE objective in Definition~\ref{def:orig_ece}. Specifically, the Group-ECE loss can be expressed in a more general form:
\begin{align}\label{eq:groupece_append}
\mathcal{L}_{\mathrm{Group\text{-}ECE}} := \sum_{j=1}^K \frac{\mathbf{G}_{:, j}^\top\cdot\mathbf{1}_N}{|\mathcal{V}|} \cdot \mathrm{dist}( \mathrm{SoftAcc}(\mathbf{G}_{:, j}),~\mathrm{SoftConf}(\mathbf{G}_{:, j})),
\end{align}
where $\mathrm{dist}(\cdot,~\cdot)$ represents a generic distance measure between accuracy and confidence. As demonstrated in our ablation studies in Section~\ref{Ablation}, we find that using the squared error, i.e., $\mathrm{dist}(x_1, x_2) = (x_1 - x_2)^2$, yields superior calibration performance, leading to the specific form of our loss function in Eq.~\ref{eq:groupece} in Section~\ref{sec:gb_ece}. 

Under appropriate conditions, the Expected Calibration Error in Definition~\ref{def:orig_ece} is recovered as a special case of the Group-ECE loss in Eq.~\ref{eq:groupece_append}. This demonstrates the flexibility of our formulation, which generalizes ECE while remaining consistent with its original definition. In particular, when group assignments are hard and the distance metric is absolute error, the conventional ECE loss naturally emerges as a non-differentiable instance of our framework. This theoretical connection provides a key interpretation of Group-ECE from the perspective of the original ECE, clearly distinguishing our approach from previous objectives. The formal statement is presented below.

\begin{proposition}[Relationship between Group-ECE and ECE]\label{prop:equiv_ece_append}
    When the group weight matrix $\mathbf{G}$ has dimensions $\mathbb{R}^{N\times M}$ and takes values $\mathbf{G}_{i, j}= \mathbbm{1}(v_i \in \mathcal{B}_j)$, and the distance metric is the absolute error, i.e., $\mathrm{dist}(x_1, x_2) = |x_1 - x_2|$, the Expected Calibration Error defined in Definition~\ref{def:orig_ece} becomes a special case of the Group-ECE loss in Eq.~\ref{eq:groupece_append}.
\end{proposition}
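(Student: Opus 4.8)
The plan is to prove the claim by direct, term-by-term substitution, showing that under the stated hypotheses each factor in the Group-ECE summand collapses onto the corresponding factor in Definition~\ref{def:orig_ece}. First I would evaluate the normalizing column-sum: with the hard binning assignment $G_{i,j} = \mathbbm{1}(v_i \in \mathcal{B}_j)$ we have $\mathbf{G}_{:, j}^\top \cdot \mathbf{1}_N = \sum_{i=1}^N \mathbbm{1}(v_i \in \mathcal{B}_j) = |\mathcal{B}_j|$. Substituting this into the prefactor of each summand immediately yields $\frac{|\mathcal{B}_j|}{|\mathcal{V}|}$, which matches the bin weight $\frac{|\mathcal{B}_m|}{|\mathcal{V}|}$ in the ECE definition once we identify the group index $j$ with the bin index $m$ (using $K = M$).

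Next I would simplify the soft statistics. Plugging $G_{i,j} = \mathbbm{1}(v_i \in \mathcal{B}_j)$ and $(\mathbf{G}_{:, j}^\top \cdot \mathbf{1}_N)^{-1} = |\mathcal{B}_j|^{-1}$ into Eq.~\ref{eq:soft_acc} turns the weighted sum into an unweighted sum over the nodes lying in bin $\mathcal{B}_j$, giving $\mathrm{SoftAcc}(\mathbf{G}_{:, j}) = |\mathcal{B}_j|^{-1} \sum_{v_i \in \mathcal{B}_j} \mathbbm{1}(y_i = \hat{y}_i) = \mathrm{Acc}(\mathcal{B}_j)$; the identical manipulation applied to Eq.~\ref{eq:soft_conf} yields $\mathrm{SoftConf}(\mathbf{G}_{:, j}) = \mathrm{Conf}(\mathcal{B}_j)$. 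Finally, choosing $\mathrm{dist}(x_1, x_2) = |x_1 - x_2|$ converts the generic distance into the absolute deviation $|\mathrm{Acc}(\mathcal{B}_j) - \mathrm{Conf}(\mathcal{B}_j)|$, so the entire summand coincides with the $m = j$ term of $\mathcal{L}_{\mathrm{ECE}}$, and summing over $j = 1, \dots, M$ recovers the ECE objective exactly.

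The only genuine subtlety — and the step I would handle most carefully — is reconciling the index sets of the sums: the soft statistics in Eqs.~\ref{eq:soft_acc}--\ref{eq:soft_conf} range over the labeled set $\mathcal{V}_L$, whereas the bins $\mathcal{B}_m$ in Definition~\ref{def:orig_ece} are carved out of all of $\mathcal{V}$. I would resolve this by noting that the accuracy term $\mathbbm{1}(y_i = \hat{y}_i)$ is only meaningful on labeled nodes, so the ECE is in fact implicitly evaluated on $\mathcal{V}_L$; formally one either restricts to the setting $\mathcal{V}_L = \mathcal{V}$ or reads each $\mathcal{B}_j$ as $\mathcal{B}_j \cap \mathcal{V}_L$, after which the two summation domains agree. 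A secondary point to dispatch is the degenerate case of an empty bin, $|\mathcal{B}_j| = 0$: there the factor $(\mathbf{G}_{:, j}^\top \cdot \mathbf{1}_N)^{-1}$ is undefined, but since the prefactor $\frac{|\mathcal{B}_j|}{|\mathcal{V}|}$ vanishes, the summand contributes zero under the usual convention, exactly as in the original ECE. With these bookkeeping points settled the equality is immediate, establishing that ECE is precisely the hard-assignment, absolute-error special case of Group-ECE.
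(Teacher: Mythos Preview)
Your proposal is correct and follows essentially the same term-by-term substitution strategy as the paper's proof: both establish $\mathbf{G}_{:, j}^\top \mathbf{1}_N = |\mathcal{B}_j|$, reduce $\mathrm{SoftAcc}$ and $\mathrm{SoftConf}$ to $\mathrm{Acc}$ and $\mathrm{Conf}$ under the hard assignment, and then invoke the absolute-error choice of $\mathrm{dist}$ to recover $\mathcal{L}_{\mathrm{ECE}}$. In fact you are slightly more careful than the paper, which silently passes over both the $\mathcal{V}_L$ versus $\mathcal{V}$ discrepancy and the empty-bin degeneracy that you explicitly flag and resolve.
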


\begin{proof}
    The ECE score in Definition~\ref{def:orig_ece} includes $M$ terms corresponding to $M$ different bins $\mathcal{B}_m,~\forall m \in \{1, \ldots, M\}$. To complete the proof, we show that for all $m \in \{1, \ldots, M\}$, each term in the ECE is equivalent to the corresponding term in the Group-ECE under the specified conditions for $\mathbf{G}$ and $\mathrm{dist}(\cdot, \cdot)$. To do so, we first prove the equivalence between the soft mean accuracy and confidence (i.e., $\mathrm{SoftAcc}(\cdot)$ and $\mathrm{SoftConf}(\cdot)$) and their non-differentiable counterparts (i.e., $\mathrm{Acc}(\cdot)$ and $\mathrm{Conf}(\cdot)$). We then demonstrate the equivalence of the overall loss functions.

    \noindent \textbf{Part 1: Accuracy Equivalence.} For the soft mean accuracy in Eq.~\ref{eq:groupece}, we have
    \begin{align*}
        \mathrm{SoftAcc}(\mathbf{G}_{:, m}) &= (\mathbf{G}_{:, m}^\top \cdot \mathbf{1}_N)^{-1} \sum_{v_i \in \mathcal{V}_L} G_{i, m} \cdot \mathbbm{1}(y_i = \hat{y}_i) \\ 
        &= \left(\sum_{i=1}^N \mathbbm{1}(v_i \in \mathcal{B}_m)\right)^{-1} \sum_{v_i \in \mathcal{V}_L} G_{i, m} \cdot \mathbbm{1}(y_i = \hat{y}_i) \\ 
        &= \frac{1}{|\mathcal{B}_m|} \sum_{v_i \in \mathcal{V}_L} G_{i, m} \cdot \mathbbm{1}(y_i = \hat{y}_i) \\ 
        &= \frac{1}{|\mathcal{B}_m|} \sum_{v_i \in \mathcal{V}_L} \mathbbm{1}(v_i \in \mathcal{B}_m) \cdot \mathbbm{1}(y_i = \hat{y}_i) \\ 
        &= \frac{1}{|\mathcal{B}_m|} \sum_{v_i \in \mathcal{B}_m} \mathbbm{1}(y_i = \hat{y}_i) \\ 
        &= \mathrm{Acc}(\mathcal{B}_m),
    \end{align*}
    where the first equality follows from Eq.~\ref{eq:soft_acc}, the second and fourth equalities follow from the property $\mathbf{G}_{i, j} = \mathbbm{1}(v_i \in \mathcal{B}_j)$, the third and fifth equalities come from basic algebra, and the final equality follows from Definition~\ref{def:orig_ece}.

    \noindent \textbf{Part 2: Confidence Equivalence.} Using similar steps as in Part 1, the equivalence between $\mathrm{SoftConf}(\mathbf{G}_{:, m})$ and $\mathrm{Conf}(\mathcal{B}_m)$ holds trivially.

    \noindent \textbf{Part 3: Loss Function Equivalence.} For the Group-ECE loss in Eq.~\ref{eq:groupece}, we have
    \begin{align*}
        \mathcal{L}_{\mathrm{Group\text{-}ECE}} &= \sum_{m=1}^M \frac{\mathbf{G}_{:, m}^\top \cdot \mathbf{1}_N}{|\mathcal{V}|} \cdot \mathrm{dist}(\mathrm{SoftAcc}(\mathbf{G}_{:, m}), \mathrm{SoftConf}(\mathbf{G}_{:, m})) \\ 
        &= \sum_{m=1}^M \frac{\mathbf{G}_{:, m}^\top \cdot \mathbf{1}_N}{|\mathcal{V}|} \cdot \mathrm{dist}(\mathrm{Acc}(\mathcal{B}_m), \mathrm{Conf}(\mathcal{B}_m)) \\ 
        &= \sum_{m=1}^M \frac{\mathbf{G}_{:, m}^\top \cdot \mathbf{1}_N}{|\mathcal{V}|} \cdot | \mathrm{Acc}(\mathcal{B}_m) - \mathrm{Conf}(\mathcal{B}_m) | \\ 
        &= \sum_{m=1}^M |\mathcal{V}|^{-1} \cdot \left(\sum_{i=1}^N \mathbbm{1}(v_i \in \mathcal{B}_m)\right) \cdot | \mathrm{Acc}(\mathcal{B}_m) - \mathrm{Conf}(\mathcal{B}_m) | \\ 
        &= \sum_{m=1}^M \frac{|\mathcal{B}_m|}{|\mathcal{V}|} \cdot | \mathrm{Acc}(\mathcal{B}_m) - \mathrm{Conf}(\mathcal{B}_m) | \\ 
        &= \mathcal{L}_{\mathrm{ECE}},
    \end{align*}
    where the first equality follows from Eq.~\ref{eq:groupece}, the second equality follows from Parts 1 and 2 of the proof, the third equality comes from using the absolute error for $\mathrm{dist}(\cdot, \cdot)$, the fourth equality uses the property $\mathbf{G}_{i, j} = \mathbbm{1}(v_i \in \mathcal{B}_j)$, the fifth equality simplifies the terms using basic algebra, and the last equality follows from Definition~\ref{def:orig_ece}. Therefore, the proof is complete.
\end{proof}
\end{document}